\newtheorem{definition}{Definition}
\newtheorem{proposition}{Proposition}
\newtheorem{proof}{Proof}
\def\JJ{{\bf J}}
 \def\L{{\cal L}}
 \def\w{{\mathbf w}} 
 \def\U{{\mathbf U}} 
 \def\V{{\mathbf V}} 
 \def\A{{\mathbf \Lambda}}
\begin{document}
 \title{Learning Connectivity with Graph Convolutional Networks for Skeleton-based Action Recognition}

\author{Hichem  Sahbi\\
Sorbonne University, CNRS, LIP6\\
F-75005, Paris, France \\
hichem.sahbi@sorbonne-universite.fr
}

\maketitle

\begin{abstract}
  Learning graph convolutional networks (GCNs) is an emerging field which aims at generalizing convolutional operations to arbitrary non-regular domains. In particular, GCNs operating on spatial domains show superior performances compared to spectral ones, however their success is highly dependent on how the topology of input graphs is defined.\\
In this paper, we introduce a novel framework for graph convolutional networks that learns the topological properties of graphs. The design principle of our method is based on the optimization of a constrained objective function which learns not only the usual convolutional parameters in GCNs but also a transformation basis that conveys the most relevant topological relationships in these graphs. Experiments conducted on the challenging task of skeleton-based action recognition shows the superiority of the proposed method compared to handcrafted graph design as well as the related work.

\end{abstract}

\IEEEpeerreviewmaketitle

\section{Introduction}

Deep learning is currently witnessing a major interest in computer vision and neighboring fields \cite{Cun2015}. Its principle consists in learning mapping functions, as multi-layered convolutional or recurrent neural networks, whose inputs correspond  to different stimulus (images, videos, etc.) and outputs to their classification and regression. Early deep learning architectures, including AlexNet \cite{Krizhevsky2012} and other networks~\cite{SzegedyCVPR2015,He2016,HuangCVPR2017,sahbiicassp2015,He2017,ross2015,javad2017,zaremba2014,mike1997,zhang2020,chris2017,sahbijiuicassp2016,indola2016,Ishaan2017,Ronneberger2015,Chen2017,Long2015,mingyuanpr2019,ArjovskyICML2016,DorobantuCorr2016,Mhammedi2017,sahbiacm2000,Vorontsovicml2017,Wisdom2016,JeffTPAMI2017,sahbiicassp2016,indola2014,sahbiiccv2019}, were initially dedicated  to vectorial data such as images~\cite{Krizhevsky2012,He2016}. However,  data sitting on top  of irregular domains (including graphs) require extending deep learning to non-vectorial data~\cite{Bruna2013,Defferrard2016,Huang2018,Kipf2016}.  These extensions, widely popularized  as graph neural networks (GNNs), are currently emerging for different use-cases and applications~\cite{Monti2017}. Two major families of GNNs exist in the literature; spectral and spatial.  Spectral methods~\cite{Bruna2013,Defferrard2016,Henaff2015,Kipf2016,Levie2018,Li2018,Zhuang2018,Chen2018,Huang2018} achieve convolution by {\it projecting} the signal of a given graph onto a Fourier basis (as the spectral decomposition of the graph Laplacian) prior to its convolution in the spectral domain using the hadamard (entrywise) product, and then  {\it back-projecting} the resulting convoluted signal in the input  domain using the inverse of the Fourier transform \cite{Slepian1983}.  Whilst spectral convolution is well defined, it requires solving a cumbersome eigen-decomposition of the Laplacian. Besides,  the resulting Fourier basis is graph-dependent and non-transferable to general and highly irregular graph structures \cite{Monti2017}. Another category of GNNs, dubbed as spatial \cite{Gori2005,Micheli2009,Scarselli2008,Wu2019,Hamilton2017}, has also emerged and seeks to achieve convolutions directly in the input domain without any preliminary step of spectral decomposition. Its general principle consists in {\it aggregating} node representations before applying convolution  to the vectorized node aggregates \cite{Atwood2016,Gao2018,Niepert2016,Hamilton2017,Monti2017,Zhang2018}. This second category of methods is deemed computationally more efficient and also more effective compared to spectral ones; however, its success is reliant on the {\it aggregate operators} which in turn depend on the topology of input graphs.\\
\indent Graph topology is usually defined with one or multiple adjacency matrices (or equivalently their Laplacian operators) that capture connectivity in these graphs as well as their differential properties. Most of the existing GNN architectures rely on predetermined graph structures which dependent on the   properties of the underlying applications \cite{WangICLR2018,Loukas2020,Atamna} (e.g.,  node-to-node relationships in social networks \cite{Wu2019}, edges in 3D modeling~\cite{STLSTM16,SBU12,Shicvpr2019,Yongcvpr2015,YanAAAI2018}, protein connectivity in biological systems \cite{Martin2016,Shuqian2019},  etc.) whilst other   methods handcraft graph connections by modeling similarities between nodes \cite{Bresson16}\footnote{\scriptsize using standard similarity functions (see for instance~\cite{sahbirr2002,sahbiphd2003,sahbirr2004,sahbisoft2008}).}. However,  connections (either intrinsically available or handcrafted) are powerless  to optimally capture all the relationships between nodes as their setting is oblivious to the targeted applications. For instance, node-to-node relationships, in human skeletons, capture the intrinsic anthropometric characteristics of individuals (useful for their identification) while other connections, yet to infer, are necessary for recognizing their dynamics and actions (See Fig.~\ref{fig:A1}). Put differently, depending on the task at hand, connectivity  should be appropriately learned by including {\it not only} the existing intrinsic links between nodes in graphs but also  their extrinsic (inferred) relationships.  \\  
\indent  In this paper, we introduce a novel framework that learns convolutional filters on graphs together with their topological properties. The latter are modeled through matrix operators that capture multiple aggregates on graphs. Learning these topological properties  relies on a constrained cross-entropy loss whose solution corresponds to the learned entries of  these matrix operators.  We consider different {\it constraints} (including  stochasticity, orthogonality and symmetry) acting as regularizers which reduce the space of possible solutions and the risk of overfitting. Stochasticity implements random walk Laplacians while orthogonality models multiple aggregation operators with non-overlapping supports; it also avoids redundancy and oversizing the learned GNNs with useless parameters. Finally, symmetry reduces further  the number of training parameters and allows learning positive semi-definite matrix operators. We also consider different reparametrizations, particularly {\it crispmax}, that implement orthogonality while being highly effective as shown later in experiments.

\section{Related work}
Without any a priori knowledge, graph inference (a.k.a graph design) is ill-posed and NP-hard \cite{Sandeep2019, Hanjun2017,Marcelo2018}.  Most of the existing solutions rely on constraints (including similarity, smoothness, sparsity, band-limitedness,  etc. \cite{Belkin2003,dong18,Daitch,Sardellitti16,LeBars19,Sardellitti19,Valsesia18,Kalofolias,Egilmez,Chepuri,Dong2016})  which have been adapted for a better conditioning of graph design \cite{Pasdeloup2017,Thanou2017}.  From the machine learning point-of-view and particularly in GNNs, early methods \cite{Micheli2009,kipf17} rely on handcrafted or predetermined graphs  that model node-to-node relationships using similarities or the inherent properties of the targeted applications~\cite{sahbiijmir2016,vo2012transductive,sahbitnnls2017}. These relationships define operators (with adjacency matrices or Laplacians) that aggregate the neighbors of nodes before applying convolutions on the resulting aggregates.  Existing operators include the power series  of the adjacency matrices \cite{YLi2018} (a.k.a power maps) and also the recursive  Chebyshev polynomial which provides an orthogonal Laplacian basis \cite{Bresson16}. However, in spite of being relatively effective, the potential of these handcrafted operators  is not fully explored as their design is either rigid or agnostic to the tasks at hand or achieved using the tedious cross validation.  More recent  alternatives seek to define graph topology that best fits a given classification or regression problem \cite{Yaguang2019,Fetaya2018,Alet2018,Alet2018b,Luca2019,ChenAAAI2020,Zhuang2018,Li2018}. For instance, authors in \cite{Luca2019} propose a GNN for semi-supervised classification tasks that learns graph topology with sparse structure given a cloud of points; node-to-node connections are modeled with a joint probability distribution on Bernoulli random variables whose parameters are found using bi-level optimization. A computationally more efficient variant of this method is introduced in \cite{ChenAAAI2020} using a weighted cosine similarity and edge thresholding. \\
\indent Other solutions make improvement over the original GNNs in \cite{kipf17}  by exploiting symmetric matrices; for instance, the adaptive graph convolutional network in \cite{Li2018} discovers hidden structural relations (unspecified in the original graphs), using a so-called residual graph adjacency matrix by learning a distance function over nodes. The work in \cite{Zhuang2018} introduces a dual architecture with two parallel graph convolutional layers sharing the same parameters. This method considers a normalized adjacency matrix and a positive pointwise mutual information matrix in order to capture node co-occurrences through random walks sampled from a graph. The difference of our contribution, w.r.t this related work, resides in multiple aspects; on the one hand, in contrast to many existing methods  (e.g., \cite{YLi2018} which consider a single adjacency matrix shared through power series), the matrix operators designed in our contribution are non-parametrically learned and this provides more flexibility to our design. On the other hand, constraining these matrices (through stochasticity\footnote{\scriptsize In contrast to other methods (e.g., \cite{Micheli2009}) which consider unnormalized adjacency matrices, stochasticity (used in our proposed approach) normalizes these matrices and thereby prevents from having node representations with extremely different scales.}, orthogonality and symmetry\footnote{\scriptsize  Symmetry is also used in \cite{dong18,Sardellitti19,LeBars19} in order to enforce the positive semi-definiteness of the learned Laplacians.  However, the formulation presented in our paper is different from this related work in the fact that self-loops and multiple connected components are allowed, and this provides more flexibility to our design.}) provides us with an effective regularization that mitigates overfitting  as corroborated later in experiments.
\def\A{{\bf A}}

\def\r{{\!(r)}}
\def\k{{\!(k)}}

\def\Ar{{\A^\r}}
\def\Ak{{\A^\k}}

\def \Apr{{\A^{\r}_{\phantom{+}}}}
\def \Apk{{\A^{\k}_{\phantom{+}}}}
\def \Apkm{{\A^{{\!(k-1)}}_{\phantom{+}}}}

\def\I{{\bf I}}
\def\X{{\bf X}}
\def\B{{\bf B}}
\def\K{{\bf K}}
\def\U{{\bf U}}
\def\W{{\bf W}}

\def\S{{\cal S}}
\def\N{{\cal N}}

\def\G{{\cal G}}
\def\V{{\cal V}}
\def\E{{\cal E}}
\def\F{{\cal F}}
\def \v{{\bf vec}} 
\section{Spatial graph convolutional networks} 
Let $\S=\{\G_i=(\V_i, \E_i)\}_i$ denote a collection of graphs with $\V_i$, $\E_i$ being respectively the nodes and the edges of $\G_i$. Each graph $\G_i$ (denoted for short as $\G=(\V, \E)$) is endowed with a graph signal $\{\psi(u) \in \mathbb{R}^s: \ u \in \V\}$ and associated with an adjacency matrix $\A$ with each entry  $\A_{uu'}>0$ iff $(u,u') \in \E$ and $0$ otherwise. Graph convolutional networks (GCNs) return both the representation and the classification of $\G$ by learning $K$ filters $\F=\{g_\theta\}_{\theta=1}^K$; each filter $g_\theta=(\V_\theta,\E_\theta)$, also referred to as graphlet, corresponds to a graph with a small number of nodes and edges, i.e.,  $|\V_\theta| \ll |\V|$ and  $|\E_\theta| \ll |\E|$.  This  filter $g_\theta$ defines a convolution at a given node $u \in \V$ of $\G$  as
\begin{equation}\label{initial} 
(\G \star g_\theta)_u = f\bigg( \frac{1}{|\V_\theta|}\sum_{u' \in \N_r(u),v \in \V_\theta} \big\langle \psi(u'),\psi(v) \big \rangle\bigg),
\end{equation} 
being $f$ a nonlinear activation, $\N_r(u)$ the $r$-hop neighbors of $u$ and $\langle . , . \rangle: \mathbb{R}^s \times \mathbb{R}^s \rightarrow \mathbb{R}$ the inner product. This graph convolution ---  similar to the convolution kernel~\cite{haussler99} --- bypasses the ill-posedness of the spatial support around $u$ due to arbitrary degrees and node permutations in $\N_r(u)$;  since  the input of $f$  is defined as the sum of all of the inner products between all of the possible signal pairs taken from $\psi(\N_r(u)) \times \psi(\V_\theta)$, its evaluation does not require  any hard-alignment between these pairs and it is thereby agnostic to any arbitrary automorphism  in $\G$ and $g_\theta$. \\
Considering  $w_\theta=\frac{1}{|\V_\theta|}\sum_{v \in \V_\theta} \psi(v)$  as the aggregate of the graph signal in $\V_\theta$, Eq.~(\ref{initial}) reduces to  
\begin{equation}\label{initial2} 
(\G \star g_\theta)_u = f\bigg(\bigg\langle \sum_{u'} \A^{\r}_{uu'} . \psi(u'), w_\theta \bigg\rangle \bigg),
\end{equation} 
 
\def\K{{\cal K}}

\noindent with  $\Apr$ being the $r$-hop adjacency matrix of $\G$.  From this definition, knowing the parameters of the aggregate $w_\theta$ is sufficient in order to {\it well} define spatial convolutions on graphs. Besides, each aggregate filter $w_\theta$ has less parameters, and its learning is less subject to overfitting, compared to the non-aggregate filter $g_\theta$ especially when the latter is stationary so the parameters of $w_\theta$ could be shared through different locations (nodes) of $\G$. Using Eq.~(\ref{initial2}), the extension of convolution to  $K$-filters and $|\V|$ nodes can be written as 
 
 \begin{equation}\label{matrixform} 
 (\G \star \F)_\V = f\big(\Apr \  \U^\top  \   \W\big), 
 \end{equation} 
 
\noindent  here $^\top$ is the matrix transpose operator, $\U \in \mathbb{R}^{s\times n}$  is the  graph signal (with $n=|\V|$), $\W \in \mathbb{R}^{s \times C}$  is the matrix of convolutional parameters corresponding to the $C$ channels (filters) and  $f(.)$ is now applied entrywise. In Eq.~\ref{matrixform}, the input signal $\U$ is projected using the adjacency matrix $\A$ and this provides for each node $u$, the  aggregate set of its neighbors. Taking powers of $\A$ (i.e., $\A^{\!(r)}$, with $r>1$) makes it possible to capture the $r$-hop neighbor aggregates in $\V$ and thereby to model larger extents and more influencing contexts. When the adjacency matrix $\A$ is common to  all graphs\footnote{\scriptsize e.g., when considering a common graph structure for all actions in videos.}, entries of $\A$ could be handcrafted or learned so Eq.~(\ref{matrixform}) implements a convolutional network with two layers; the first one aggregates signals in $\N_r(\V)$ by multiplying $\U$ with $\A^{\!(r)}$ while the second layer achieves convolution by multiplying the resulting aggregate signals with the $C$ filters in $\W$.
\def \J{{E}}
 \section{Learning connectivity in GCNs}\label{learned}
 In the sequel of this paper, we rewrite the aforementioned adjacency matrices as $\{\A_k\}_k$; the latter will also be referred to as context matrices. Following Eq.~\ref{matrixform}, the term $\A_k \U^\top$  acts as a feature extractor that collects different statistics including means and variances of node contexts; indeed, when $\A_k$ is column-stochastic,  $\A_k   \U^\top $ models expectations  $\{\mathbb{E}(\psi(\N_r(u)))\}_u$, and if one considers $\I-\A_k$ instead of $\A_k$, then $(\I-\A_k) \  \U^\top$ captures (up to a squared power\footnote{\scriptsize Note that removing this square power maintains skewness and provides us with more discriminating features.}) statistical variances $\{\psi(u)-\mathbb{E}(\psi(\N_r(u)))\}_u$. Therefore, $\{\A_k\}_k$ constitutes a transformation basis (possibly overcomplete) which allows extracting  first and possibly higher order statistics of graph signals before  convolution. One may also design this basis to make it  orthogonal, by constraining the learned matrices  $\{\A_k\}_k$ to be cycle and loop-free (such as trees), and consider the power map $\A_k=\A^{\!(k)}$, so the basis $\{\A_k\}_k$ becomes necessarily orthogonal (see later section \ref{baselines}). The latter property --- which allows learning compact and complementary convolutional filters --- is extended to unconstrained graph structures as shown subsequently.\\
Considering $\J$ as the cross entropy loss associated to a given classification task and $\v(\{\A_k\}_k)$ as a vectorization that appends all the entries of $\{\A_k\}_k$ following any arbitrary order, we turn the design of $\{\A_k\}_k$ as a part of GCN learning. As the derivatives of $\J$ w.r.t different layers of the GCN are known (including the output of convolutional layer in Eq.~\ref{matrixform}), one may use the chain rule in order to derive the gradient $\frac{\partial \J}{\partial \v(\{\A_k\}_k)}$ and hence update the entries of $\{\A_k\}_k$ using stochastic gradient descent (SGD). In this section, we upgrade SGD by learning both the convolutional parameters of GCNs together with the matrices  $\{\A_k\}_k$ while implementing {\it orthogonality, stochasticity and symmetry}. As shown subsequently, orthogonality allows us to design  $\{\A_k\}_k$ with a minimum number of parameters, stochasticity normalizes nodes by their degrees and allows learning normalized random walk Laplacians, while symmetry reduces further the number of training parameters by constraining the upper and the lower triangular parts of the learned $\{\A_k\}_k$  to share the same parameters, and also the underlying learned random walk Laplacians to be positive semi-definite.

\def\D{{\bf D}}
\subsection{Stochasticity}
In this subsection, we rewrite  $\A_k$ for short as $\A$. Stochasticity of a given matrix $\A$ ensures that all of its entries are positive and each column sums to one; i.e. the matrix $\A$ models a Markov chain whose entry $\A_{ij}$ provides the probability of transition from one node $u_j$ to  $u_i$ in $\G$. In other words,  the matrix $\A$  captures probabilistically how reachable is the set of neighbors (context) of a given node $u_j$.  Taking powers of a stochastic matrix $\A$ provides the probability of transition in multiple steps and these  powers also preserve  stochasticity.  This property  also implements normalized random walk Laplacian operators which are valuable in the evaluation of weighted means and variances\footnote{\scriptsize also referred to as non-differential and differential features respectively.} in Eq.~\ref{matrixform} using a standard feed-forward network; otherwise, one has to consider a normalization layer (with extra parameters), especially on graphs with heterogeneous degrees in order to reduce the covariate shift and distribute the transition probability evenly  through nodes before achieving convolutions. Hence, stochasticity acts as a regularizer that reduces the complexity (number of layers and parameters) in the learned GCN and thereby the risk of overfitting.  \\
\noindent Stochasticity requires adding equality and inequality constraints in SGD, i.e., $\A_{ij}\in [0,1]$ and $\sum_{q}  \A_{qj}=1$.  In order to implement these constraints, we consider a reparametrization of the learned matrices, as $\A_{ij}= h(\hat{\A}_{ij})\slash {\sum_q h(\hat{\A}_{qj})}$, with $h: \mathbb{R} \rightarrow \mathbb{R}^+$ being strictly monotonic and this allows a free setting of the matrix $\hat{\A}$ during optimization while guaranteeing $\A_{ij} \in [0,1]$ and $\sum_{q}  \A_{qj}=1$.  During backpropagation, the gradient of the loss $\J$ (now w.r.t $\hat{\A}$) is updated using the chain rule as
\begin{equation}\label{eq0000000}
  \begin{array}{lll}
\displaystyle     \frac{\partial \J}{\partial \hat{\A}_{ij}} &=& \displaystyle \sum_p \frac{\partial \J}{\partial \A_{pj}} . \frac{\partial \A_{pj}}{\partial \hat{\A}_{ij}}  \\  \ \ \ \ \ \  & & \textrm{with}  \displaystyle  \ \ \  \frac{\partial \A_{pj}}{\partial \hat{\A}_{ij}} = \frac{h'(\hat{\A}_{pj})}{\sum_q h(\hat{\A}_{qj})}.(\delta_{pi} - \A_{pj}), 

                                                                                                                                                                                         \end{array}
                                                                                                                                                                                           \end{equation} 

\noindent and    $\delta_{pi}=1_{\{p=i\}}$. In practice $h$ is set to $\exp$ and the original gradient   $\big[ \frac{\partial \J}{\partial {\A}_{pj}}\big]_{p=1}^n$  is obtained from layerwise gradient back propagation (as already integrated in standard deep learning tools including PyTorch). Hence the new gradient  (w.r.t $\hat{\A}$) is obtained by multiplying the original one by the Jacobian $\JJ_{\textrm{stc}}=\big[\frac{\partial {\A}_{pj}}{\partial {\hat{\A}}_{ij}}\big]_{p,i=1}^n$ which merely reduces to $[ {\A}_{ij}.(\delta_{pi} - \A_{pj})]_{p,i}$ when $h(.)=\exp(.)$.

\def\tr{{\bf Tr}} 
\subsection{Orthogonality}\label{ortho}            

Learning  multiple matrices $\{\A_k\}_k$ allows us to capture different contexts and graph topologies when achieving aggregation and convolution, and this enhances the discrimination power of the learned GCN representation as shown later in experiments.  With multiple matrices $\{\A_k\}_k$ (and associated convolutional filter parameters $\{\W_k\}_k$),  Eq.~\ref{matrixform} is updated as  

\def\M{{\bf M}}
\begin{equation}\label{matrixform2} 
 (\G \star \F)_\V = f\bigg(\sum_{k=1}^K \A_k   \U^\top     \W_k\bigg).
 \end{equation} 
 If aggregation produces,  for a given $u \in \V$,  linearly dependent vectors ${\cal X}_u= \{\sum_{u'} \A_{kuu'}. \psi(u')\}_k$, then convolution will also generate  linearly dependent representations with an overestimated number of training  parameters in the null space of ${\cal X}_u$. Besides, matrices $\{\A_1,\dots,\A_K\}$ used for aggregation,  may also generate overlapping and redundant contexts.\\  Provided that  $\{\psi(u')\}_{u' \in \N_r(u)}$ are  linearly independent, the sufficient condition that makes vectors in ${\cal X}_u$ linearly independent reduces to  constraining $(\A_{kuu'})_{k,u'}$ to lie on the  Stiefel manifold  (see for instance \cite{Yasunori2005,HuangAAAI2017,Ankita2019}) defined as $V_K(\mathbb{R}^{n})=\{ \M \in  \mathbb{R}^{K \times n}: \M \, \M^\top =\I_K\}$  (with $\I_K$ being the $K \times K$ identity matrix) which thereby guarantees  orthonormality and   minimality of  $\{\A_1,\dots,\A_K\}$\footnote{\scriptsize Note that $K$ should not exceed the rank of $\big\{\psi(u')\big\}_{u' \in \N_r(u)}$ which is upper bounded by $\min(|{\cal V}|,s)$; $s$ is again the dimension of the graph signal.}. A less compelling condition is orthogonality, i.e.,  $\langle \A_k,\A_{k'} \rangle_F=0$ and $\A_{k}\geq {\bf 0}_{n}$, $\A_{k'}\geq {\bf 0}_{n}$,  $\forall k \neq k'$ --- with $\langle, \rangle_F$ being the Hilbert-Schmidt (or Frobenius) inner product defined as  $\langle \A_k,\A_{k'} \rangle_F=\tr(\A_k^\top\A_{k'})$ --- and this equates $\A_k\odot \A_{k'}= {\bf 0}_{n} $,  $\forall k\neq k'$ with $\odot$ denoting the entrywise hadamard product and  ${\bf 0}_{n}$ the $n \times n$ null matrix.
 \subsubsection{Problem statement}  

Considering the cross entropy loss $E$, the matrix operators  $\{\A_k\}_k$  (together with the convolutional filter parameters $\W=\{ \W_k \}_k$) are learned as 
\begin{equation}\label{matrixform3} 
\begin{array}{lll}
\displaystyle {\displaystyle \min}_{\{\A_k\}_k,\W} \ \ \  & \displaystyle   E\big(\A_1,\dots,\A_K;\W\big) & \\
& & \\
\displaystyle  {\textrm{s.t.}} &  \A_k\odot \A_{k}> {\bf 0}_{n}  &  \\
   &  \A_k\odot \A_{k'}= {\bf 0}_{n}  &      \forall   k, k' \neq k.
\end{array}
\end{equation}
A natural approach to solve this problem is to iteratively and alternately  minimize over one matrix while keeping all the others fixed. However --- and besides the non-convexity of the loss --- the feasible set formed by these $O(K^2)$ bi-linear constraints is not convex w.r.t $\{\A_k\}_k$. Moreover, this iterative procedure is computationally expensive as it requires solving multiple instances of constrained projected gradient descent and the number of necessary iterations to reach convergence is large in practice. All these issues make solving this  problem  challenging and computationally intractable even for reasonable values of $K$ and $n$. In what follows, we consider an alternative, dubbed as crispmax, that makes the design of orthogonality substantially  more tractable and also effective.

\subsubsection{Crispmax}  
              
\indent  We investigate a  workaround that   optimizes these matrices while guaranteeing their  orthogonality as a part  of the optimization process. Considering   $\exp(\gamma \hat{\A}_{k}) \oslash (\sum_{r=1}^K \exp(\gamma \hat{\A}_{r}))$ as a softmax reparametrization of $\A_{k}$, with  $\oslash$ being the entrywise hadamard division and  $\{\hat{\A}_k\}_k$ free parameters in $\mathbb{R}^{n \times n}$, it becomes possible to implement orthogonality by choosing large values of  $\gamma$ in order to make this softmax {\it crisp}; i.e., only one entry $\A_{kij}\gg 0$ while all others $\{\A_{k'ij}\}_{k'\neq k}$ vanishing thereby leading to $\A_k\odot \A_{k'}= {\bf 0}_{n}$,  $\forall   k, k' \neq k$. By plugging this {\it crispmax} reparametrization into Eq.~\ref{matrixform3}, the gradient of the loss $\J$ (now w.r.t   $\{\hat{\A}_k\}_k$)  is updated using the chain rule as

\begin{equation}\label{eq00001111}
  \begin{array}{lll}
\displaystyle      & \displaystyle \frac{\partial \J}{\partial \v(\{\hat{\A}_k\}_k)} &= \displaystyle  \JJ_\textrm{orth} .   \frac{\partial \J}{\partial \v({\{{\A}_k\}_k})},
 \end{array}
                                                                                                                                                                                           \end{equation} 
 {with each entry $({\bf i},{\bf j})=(kij,k'i'j')$ of the Jacobian $\JJ_{\textrm{orth}}$ being} 
\begin{equation}\label{eq00001112}
  \begin{array}{lll}
   &\displaystyle  \left\{ \begin{array}{ll} \gamma {\A}_{kij}.(1 - \A_{k ij}) &  {\footnotesize \textrm{if} \ k=k', i=i', j=j'} \\ -\gamma {\A}_{kij}.\A_{k' ij} &  {\small \textrm{if} \ k \neq k', i=i', j=j'}   \\ 0 &  \textrm{\small  otherwise,}  \end{array}\right. 
                                                                                                                                                                                         \end{array}
                                                                                                                                                                                           \end{equation} 

\noindent here $\frac{\partial \J}{\partial \v({\{{\A}_k\}_k})}$   is obtained from layerwise gradient backpropagation. Note that the aforementioned  Jacobian is extremely sparse and efficient to evaluate as only $K n^2$ entries are non-zeros (among the $K^2n^4$ possible entries).  However, with this reparametrization, large values of  $\gamma$  may lead to numerical instability when evaluating the exponential. We circumvent this instability by choosing $\gamma$ that satisfies $\epsilon$-orthogonality: a surrogate property defined subsequently.
\begin{definition}[$\epsilon$-orthogonality]  A basis  $\{\A_1,\dots,\A_K\}$ is $\epsilon$-orthogonal if $ \ \forall k,k' \neq k$,   $$\A_k\odot \A_{k'} \leq \epsilon  \ \mathds{1}_{n},$$ with $\mathds{1}_{n}$ being the $n \times n$ unitary matrix.\\
\end{definition} 
Considering the above definition, (nonzero) matrices belonging to an $\epsilon$-orthogonal basis are linearly independent w.r.t $\langle .,. \rangle_F$ (provided that $\gamma$ is sufficiently large) and  hence this basis  is also minimal. The following proposition provides a tight lower bound on $\gamma$ that satisfies $\epsilon$-orthogonality.
\def \u{{\bf u}}
\begin{proposition} [$\epsilon$-orthogonality bound] Consider $\{\A_{kij}\}_{ij}$ as the entries of the crispmax reparametrized  matrix  $\A_k$ defined as  $\exp(\gamma \hat{\A}_{k}) \oslash \big(\sum_{r=1}^K \exp(\gamma \hat{\A}_{r})\big)$.  Provided that  $\exists \delta>0:$ $\forall  i,j,\ell'$, $\exists !\ell$,  $\hat{\A}_{\ell ij} \geq  \hat{\A}_{\ell' ij}+\delta$ (with $\ell' \neq \ell$) and if $\gamma$ is at least $$\displaystyle \frac{1}{\delta} \ln\bigg(\frac{K \sqrt{(1-2\epsilon)}}{1-\sqrt{(1-2\epsilon)}}+1\bigg)$$
then  $\{\A_1,\dots,\A_K\}$ is $\epsilon$-orthogonal. \\
 
\end{proposition}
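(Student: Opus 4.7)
The plan is to reduce the matrix bound $\A_k \odot \A_{k'} \leq \epsilon\,\mathds{1}_n$ to the scalar inequality $\A_{kij}\A_{k'ij} \leq \epsilon$ at a generic entry $(i,j)$, which I analyze using only the softmax structure along that fiber. Fix such an $(i,j)$ and let $\ell = \ell(i,j)$ be the unique dominator guaranteed by the hypothesis. Since the softmax is invariant under additive shifts of its logits, I may translate so that $\hat{\A}_{\ell ij} = 0$ and therefore $\hat{\A}_{rij} \leq -\delta$ for $r \neq \ell$. Setting $t := e^{-\gamma\delta}$, the softmax ratio yields $\A_{rij} \leq t\,\A_{\ell ij}$ for $r \neq \ell$, and summing over $r$ then gives $\A_{\ell ij} \geq 1/(1+(K-1)t)$.

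Next I would rewrite the hypothesis on $\gamma$ as the equivalent bound $t \leq (1-s)/((K-1)s+1)$ with $s := \sqrt{1-2\epsilon}$, and substitute back to obtain the clean form $\A_{\ell ij} \geq L$, where $L := ((K-1)s+1)/K$. I then split according to whether $\ell \in \{k,k'\}$. In Case~A (say $k=\ell$), the simple bound $\A_{k'ij} \leq 1 - \A_{\ell ij}$ gives $\A_{\ell ij}\A_{k'ij} \leq \A_{\ell ij}(1-\A_{\ell ij})$; when $L \geq 1/2$ the map $x \mapsto x(1-x)$ is decreasing on $[L,1]$ and the product is at most $L(1-L)$, while when $L < 1/2$ a short calculation shows $s < 1/2$ (for every $K \geq 2$), hence $\epsilon > 3/8$, and the trivial estimate $x(1-x) \leq 1/4 \leq \epsilon$ closes the case. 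In Case~B ($k,k' \neq \ell$), AM--GM applied to $\A_{kij}+\A_{k'ij} \leq 1 - \A_{\ell ij}$ yields $\A_{kij}\A_{k'ij} \leq (1-\A_{\ell ij})^2/4 \leq (K-1)^2(1-s)^2/(4K^2)$.

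Two algebraic inequalities then remain to be checked over $s \in [0,1]$ and $K \geq 2$: $L(1-L) \leq (1-s^2)/2$ in Case~A and $(K-1)^2(1-s)^2/(4K^2) \leq (1-s^2)/2$ in Case~B. Both cancel a common factor $(1-s)$; Case~B reduces to the trivially valid $(K-1)^2 - 2K^2 \leq ((K-1)^2 + 2K^2)s$ (the LHS is negative), and Case~A reduces to $(K^2 - 4K + 2)s \leq K^2 - 2K + 2$, which is straightforward to verify over $[0,1]$ for every $K \geq 2$.

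The main obstacle I anticipate is locating the correct substitution: the seemingly ad~hoc choice $s = \sqrt{1-2\epsilon}$ is precisely what makes the lower bound on $\A_{\ell ij}$ (naturally expressed in terms of $t$) line up with the target bound $\epsilon$ on the worst-case product $\A_{\ell ij}(1-\A_{\ell ij})$. Once this substitution is in hand, the polynomial manipulations are routine, and the slackness in Case~B reflects the fact that the binding constraint comes from Case~A.
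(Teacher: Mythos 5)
Your proof is correct, but it takes a genuinely different route from the paper's. The paper bounds an arbitrary product $\A_{kij}\A_{k'ij}$ by the product of the two largest softmax outputs and then applies the identity $ab=\frac{1}{2}(a^2+b^2)-\frac{1}{2}(a-b)^2$ together with $a^2+b^2\leq 1$, reducing everything to a single lower bound on the \emph{gap} $\A_{\ell ij}-\A_{\ell' ij}\geq \big(1+\frac{K}{e^{\gamma\delta}-1}\big)^{-1}$; the threshold on $\gamma$ then falls out directly by solving $\frac{1}{2}-\frac{1}{2}y^2\leq\epsilon$, i.e., the constant $\sqrt{1-2\epsilon}$ is \emph{derived} rather than posited. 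You instead lower-bound only the dominant output, $\A_{\ell ij}\geq L=\frac{(K-1)s+1}{K}$ with $s=\sqrt{1-2\epsilon}$, split according to whether the pair $(k,k')$ contains the dominator, and close each case with $x(1-x)$ monotonicity (plus the $L<\frac{1}{2}\Rightarrow\epsilon>\frac{3}{8}$ escape) or AM--GM, which leaves you two residual polynomial inequalities in $(s,K)$ to check --- I verified both, including the binding Case~A reduction to $(K^2-4K+2)s\leq K^2-2K+2$. What your version buys is the observation that the non-dominator pairs are handled with strict slack and the constraint is tight only for pairs involving the dominant index; what it costs is that the substitution $s=\sqrt{1-2\epsilon}$ appears unmotivated and must be verified a posteriori, whereas the paper's single chain of inequalities produces it and covers both of your cases at once. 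Two minor points: state explicitly that you assume $0<\epsilon<\frac{1}{2}$ so that $s\in(0,1)$ and the cancellation of the factor $(1-s)$ is legitimate (the endpoint cases are trivial), and note that your reading of the somewhat garbled quantifiers in the hypothesis --- a unique dominator $\ell$ per entry $(i,j)$ with margin $\delta$ over all $\ell'\neq\ell$ --- is indeed the one the paper's own proof relies on.
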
 
 
\begin{proof}  
\small For any entry $i,j$, one may find $\ell$, $\ell'$ in $\{1,\dots,K\}$ (with $\ell \neq \ell'$) s.t.  $(\A_k \odot \A_{k'})_{ij} $ 
{\hspace*{-1cm} \begin{equation*}
\begin{array}{lll}
\displaystyle  &\leq &  \displaystyle  (\A_\ell \odot \A_{\ell'})_{ij}  \\ 
& & \\
&= &   \frac{1}{2} (\A_{\ell ij}^2 + \A_{\ell' ij}^2) -   \frac{1}{2} (\A_{\ell ij} -\A_{\ell' ij})^2     \\ 
& & \\
\displaystyle    &\leq&     \frac{1}{2}-   \frac{1}{2} (\A_{\ell ij} -\A_{\ell' ij})^2     \\ 
\displaystyle   &= &  \frac{1}{2}-  \frac{1}{2} \bigg( \displaystyle\frac{\exp(\gamma \hat{\A}_{\ell ij})-\exp(\gamma \hat{\A}_{\ell' ij})}{\exp(\gamma \hat{\A}_{\ell ij})+\exp(\gamma \hat{\A}_{\ell' ij})+\sum_{r=3}^K \exp(\gamma \hat{\A}_{rij})}\bigg)^2  \\ 
& &\\
\displaystyle &\leq &   \frac{1}{2} - \frac{1}{2}\bigg(\displaystyle\frac{\exp(\gamma \hat{\A}_{\ell ij}) -\exp(\gamma \hat{\A}_{\ell' ij})}{\exp(\gamma \hat{\A}_{\ell ij}) +(K-1)\exp(\gamma \hat{\A}_{\ell' ij})}\bigg)^2  \\
& & \\
\displaystyle & \leq  &   \frac{1}{2}  -  \frac{1}{2} \bigg(\displaystyle \frac{1}{1+ \frac{K}{\exp(\gamma \delta )-1}}\bigg)^2.
 \end{array}
\end{equation*}
The sufficient condition is to choose $\gamma$ such as  
\begin{equation*}
 \frac{1}{2}  -  \frac{1}{2} \bigg[\displaystyle \frac{1}{1+ \frac{K}{\exp(\gamma \delta )-1}}\bigg]^2 \leq \epsilon \implies \displaystyle \gamma \geq\displaystyle \frac{1}{\delta} \ln\bigg(\frac{K \sqrt{(1-2\epsilon)}}{1-\sqrt{(1-2\epsilon)}}+1\bigg).
 \end{equation*} 
 }
\begin{flushright}
$\blacksquare$
\end{flushright}

\end{proof} 

\noindent Following the above  proposition, setting $\gamma$ to the above lower bound guarantees $\epsilon$-orthogonality; for instance, when $K=2$, $\delta=0.01$ and provided that  $\gamma \geq 530$, one may obtain  $0.01$-orthogonality which is almost a strict orthogonality. This property is satisfied as long as one slightly disrupts the entries of  $\{\hat{\A}_k\}_k$ with random noise during SGD training\footnote{\scriptsize whatever  the range of entries in these matrices $\{\hat{\A}_k\}_k$.}. However, this may still lead to another limitation; precisely,  bad local minima are observed  due to an {\it early} convergence to crisp adjacency matrices. We prevent this by steadily annealing the temperature $1/\gamma$  of the softmax through epochs of SGD (using $\frac{\gamma.\textrm{epoch}}{\textrm{max\_epochs}}$ instead of $\gamma$) in order to make optimization focusing first on the loss, and then as optimization evolves, temperature cools down and allows reaching the aforementioned lower bound (thereby crispmax) and $\epsilon$-orthogonality at convergence.

\subsection{Symmetry \& Combination}\label{symm} 

Symmetry is obtained using weight sharing, i.e., by constraining the upper and the lower triangular parts of the   matrices $\{\A_k\}_k$ to share the same entries. This is guaranteed by considering the reparametrization of each matrix as $\A_k=\frac{1}{2} (\hat{\A}_k+\hat{\A}_k^\top)$ with $\hat{\A}_k$ being a free matrix. Starting from symmetric $\{\A_k\}_k$, weight sharing is maintained through SGD  by tying  pairwise symmetric entries of the gradient   $\frac{\partial \J}{\partial \v(\{\hat{\A}_k\}_k)}$ and this is equivalently obtained by multiplying the original gradient $\frac{\partial \J}{\partial \v(\{{\A}_k\}_k)}$ by the Jacobian  $\JJ_\textrm{sym}= \frac{1}{2} \big[1_{\{k=k'\}}. 1_{\{(i=i',j=j') \vee  (i=j',j=i')\}}\big]_{ijk,i'j'k'}$ which is again extremely sparse and highly efficient to evaluate.\\
One may combine symmetry with all the aforementioned constraints by multiplying the underlying Jacobians, so the final gradient is obtained by multiplying the original one as 
                            \begin{equation}\label{eq0000}
\displaystyle \frac{\partial \J}{\partial \v(\{\hat{\A}_k\}_k)} = \displaystyle   \JJ_\textrm{(sym or stc)}. \JJ_\textrm{orth}. \frac{\partial \J}{\partial \v(\{{\A}_k\}_k)}. 
                                                                                                                                            \end{equation} 
Since all the Jacobians are sparse, their product provides an extremely sparse Jacobian. This order of application is strict, as orthogonality sustains after the two other operations, while the converse is not necessarily guaranteed at the end of the optimization process.  Note that symmetry could be combined with orthogonality but not with stochasticity as the latter may undo the effect of  symmetry if applied subsequently; the converse is also true.
 
\section{Experiments}
We evaluate the performance of our  GCN learning framework on the challenging task of action recognition  \cite{ChenCVPR2013,DongCVPR2017,WangECC2016,
TranICCV2015}, using the SBU kinect dataset \cite{SBU12}. The latter is an interaction dataset acquired  using the Microsoft kinect sensor; it includes in total 282 video sequences belonging to $8$ categories:  ``approaching'', ``departing'', ``pushing'', ``kicking'', ``punching'', ``exchanging objects'', ``hugging'', and ``hand shaking'' with variable duration, viewpoint   changes and    interacting individuals (see examples in  Fig. \ref{fig1}). In all these experiments, we use the same evaluation protocol as the one suggested in \cite{SBU12} (i.e., train-test split) and we report the average accuracy over all the classes of actions.
\def\betaa{{\hat{\w}}}

\subsection{Video skeleton description}\label{graphc}
\indent Given a  video $\V$ in SBU as a sequence of skeletons, each keypoint in these skeletons defines a labeled trajectory through successive frames (see Fig.~\ref{fig1}).   Considering a finite collection of trajectories $\{v_j\}_j$ in $\V$, we process each trajectory  using {\it temporal chunking}: first we split the total duration of a  video into $M$ equally-sized temporal chunks ($M=8$ in practice), then we assign  the keypoint  coordinates of  a given trajectory $v_j$  to the $M$ chunks (depending on their time stamps) prior to concatenate the averages of these chunks and this produces the description of $v_j$ (again denoted as $\psi(v_j) \in \mathbb{R}^{s}$ with $s=3 \times M$) and $\{\psi(v_j)\}_j$  constitutes the raw description of nodes in a given video $\V$. Note that two trajectories $v_j$ and $v_k$,  with similar keypoint coordinates but arranged differently in time, will be considered as very different when using temporal chunking. Note also that beside being compact and discriminant, this temporal chunking gathers advantages --  while discarding drawbacks -- of two widely used families of techniques mainly {\it global averaging techniques} (invariant but less discriminant)  and  {\it frame resampling techniques} (discriminant but less invariant). Put differently, temporal chunking produces discriminant raw descriptions that preserve the temporal structure of trajectories while being {\it frame-rate} and {\it duration} agnostic.

\begin{figure}[hpbt]
 \begin{center}
    \centerline{\scalebox{0.38}{\input{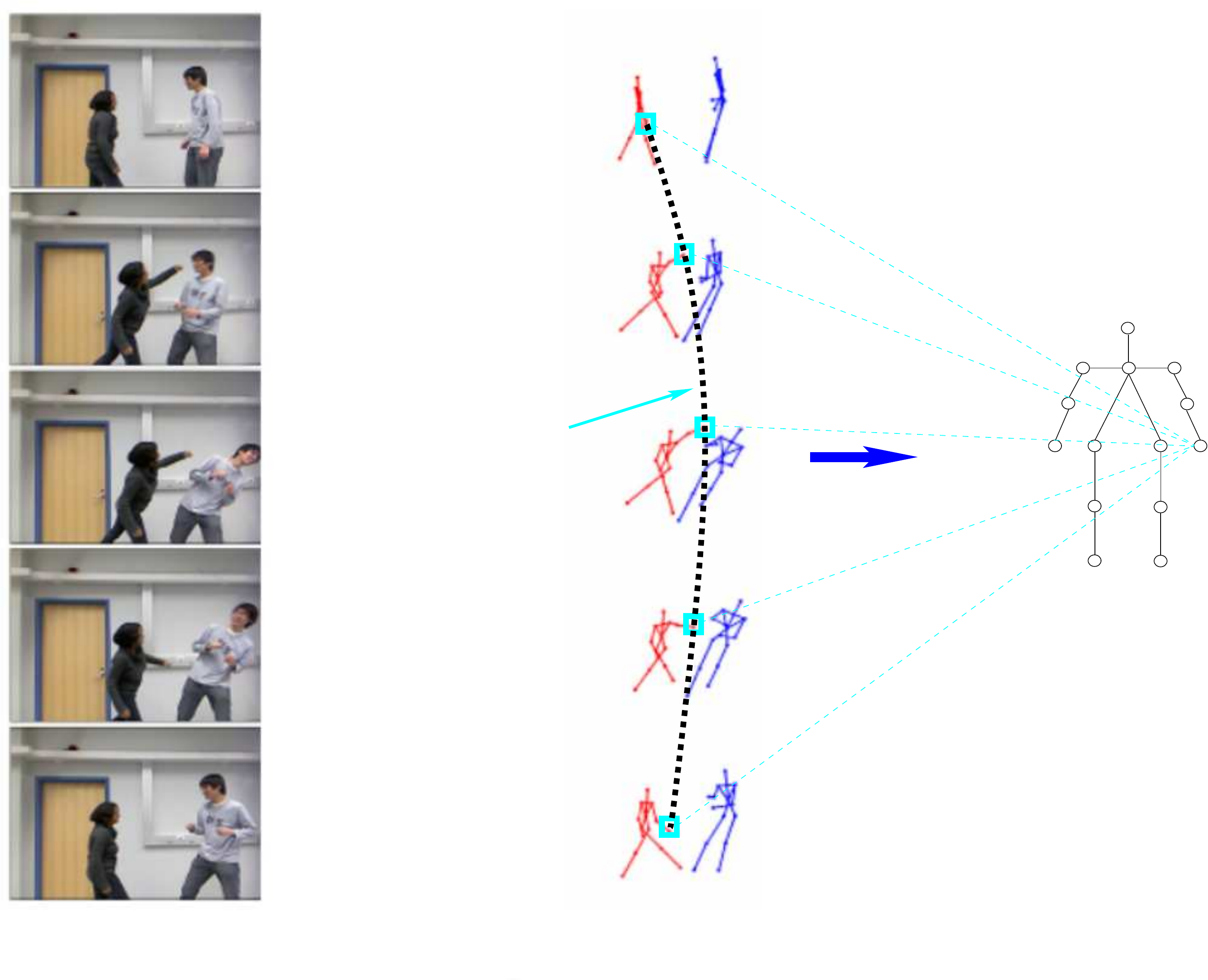_t}}}
\caption{\it This figure shows the whole keypoint  tracking and description process.} \label{fig1}
\end{center}
\end{figure}

\subsection{Setting \& Performances}\label{baselines}  

We trained the GCN  networks end-to-end for 3,000 epochs  with a batch size equal to $200$, a momentum of $0.9$ and a learning rate (denoted as $\nu(t)$)   inversely proportional to the speed of change of the cross entropy loss used to train our networks; when this speed increases (resp. decreases),   $\nu(t)$  decreases as $\nu(t) \leftarrow \nu(t-1) \times 0.99$ (resp. increases as $\nu(t) \leftarrow \nu(t-1) \slash 0.99$). All these experiments are run on a GeForce GTX 1070 GPU device (with 8 GB memory) and no data augmentation is achieved. \\

\def\L{{\bf L}}

\noindent {\bf Baselines.}  We compare the performances of our GCN against two power map baselines.  The latter, closely related to our contribution,  is given as  $\A_k=\A^{\!(k)}$ with $\A^{\!(k)}=\A^{\!(k-1)} \A$, $\A^{\!(0)}=\I$ and  this defines nested supports for convolutions. Two variants of this baseline are also considered in our experiments
\begin{itemize} 
\item {\bf Handcrafted.}  All the matrices  $\{\A_k\}_k$   are evaluated upon a  {\it handcrafted} adjacency matrix $\A$ (set using the original skeleton). In this setting, orthogonality is obtained when {\it only a subset of edges in $\G$ (or equivalently in the adjacency matrix $\A$) are kept}; this subset corresponds to edges of a spanning tree of $\A$  obtained using Kruskal \cite{Kruskal1956}.  With this initial setting of $\A$, orthogonality is maintained through $\{\A_k\}_k$ by updating only the nonzero entries of these  matrices. Symmetry (resp. stochasticity) is   obtained by taking   $\frac{1}{2} (\A+\A^\top)$ (resp. $\A \D(\A)^{-1}$ with $\D(\A)$ being the degree matrix of $\A$) instead of $\A$  so all the resulting matrices  $\{\A_k\}_k$ will preserve these two properties.
\item {\bf Learned.}  In this variant, all the {\it unmasked entries of the matrices  $\{\A_k\}_k$  are learned}.  In contrast to the handcrafted setting, orthogonality is obtained as a part of the optimization process (as already discussed in section~\ref{ortho}), so the original matrix $\A$ does not necessarily correspond to a spanning tree. Similarly, stochasticity and symmetry are implemented as a part of the optimization process; nonetheless, symmetry is structural, i.e., it is obtained by  further constraining the structure of the masks, defining the nonzero entries of $\{\A_k\}_k$, to be symmetric. 
\end{itemize}

 \begin{table}[ht]
 \begin{center}
\resizebox{0.79\columnwidth}{!}{
\begin{tabular}{cc|c|c|c|c|c|c|c}
\backslashbox{Oper}{Const}  &  &  \rotatebox{55}{none} &   \rotatebox{55}{sym} &  \rotatebox{55}{orth} &  \rotatebox{55}{stc} &  \rotatebox{55}{sym+orth}   &  \rotatebox{55}{orth+stc}  &  \rotatebox{55}{Mean} \\
 \hline
  \hline
     \multirow{3}{*}{\rotatebox{10}{HPM. }} & $K=1$ &  89.2308&  92.3077 & -- &89.2308   & -- & --  &  90.2564  \\ 
     & $K=4$ & 87.6923 & 89.2308 & 89.2308 & 87.6923 & 90.7692 &92.3077   &89.4872  \\ 
     & $K=8$& 90.7692 & 95.3846 & 92.3077 & 90.7692 & 92.3077 & 92.3077   &   92.3077 \\   
     &  Mean & 89.2308   &92.3077 &  90.7692 &  89.2308  & 91.5384 &  92.3077 &90.7692 \\
     \hline
          \multirow{3}{*}{\rotatebox{10}{LPM. }} & $K=1$ &    92.3077 & 87.6923  & --           &  95.3846 & --             & --            &   91.7949  \\
       											           & $K=4$ &   92.3077 & 92.3077  & 93.8462 &  95.3846 &  90.7692  & 96.9231 & 93.5897 \\
        												   & $K=8$&   95.3846 &  90.7692  & 87.6923 & 93.8462 & 93.8462   & 92.3077  & 92.3077  \\
 												    & Mean &  93.3333  & 90.2564 &  90.7692 &  94.8718 &  92.3077 &  94.6154& 92.7180\\
  \hline
       \multirow{3}{*}{\rotatebox{10}{Our}} & $K=1$ &    95.3846 & 93.8462  & -- &   95.3846& -- & --   &  94.8718  \\
          & $K=4$ &   93.8462 & 95.3846 & 95.3846 & 96.9231 & 93.8462 &  \bf 98.4615  &    95.6410 \\
     & $K=8$& 92.3077 & 93.8462 & 95.3846 & 90.7692 & 95.3846 & 90.7692  &   93.0769 \\
           &   Mean       &  93.8462 &  94.3590  & 95.3846  & 94.3590  & 94.6154  & 94.6154  &    94.4615 \\ 
\hline
  \end{tabular}}
\end{center}
\caption{\it Detailed performances on SBU using handcrafted and learned power map aggregation operators as well as our learned GCN operators, w.r.t combinations of (i) "constraints" (orth, sym, and stc stand for orthogonality, symmetry and stochasticity respectively) and (ii) different values of $K$. Note that orthogonality is obviously not applicable when $K=1$.}\label{table21}
\end{table}

\noindent {\bf Performances, Ablation and Comparison.}  
Table~\ref{table21} shows a comparison of action recognition performances, using our GCN  (with different settings) against the two GCN  baselines:  handcrafted and learned power map GCNs dubbed  as HPM, LPM respectively.  In these results, we consider different numbers of matrix operators.   From all these results, we observe a clear and a consistent gain of our GCN w.r.t these two baselines; at least one of the setting ($K=1$, $K=4$ or $K=8$) provides a substantial gain with globally a clear advantage when $K=4$ compared to the two other settings. We also observe, from ablation (see columns of Table~\ref{table21}),  a positive impact (w.r.t these baselines) when constraining the learned matrices to be stochastic and/or orthogonal while the impact of symmetry is not as clearly established as the two others, though globally positive. This gain, especially with stochasticity and orthogonality,  reaches the highest values when $K$ is sufficiently (not very) large and this follows the small size of the original skeletons (diameter and dimensionality of the graphs and the signal) used for action recognition which   constrains the required number of adjacency matrices. Hence, with few learned matrices (see also Fig. \ref{fig:A1}), our method is able to learn relevant representations for action recognition. Moreover, the ablation study in Table.~\ref{table21} shows that our GCN captures better the topology of the context (i.e., neighborhood system defined by the learned matrices $\{\A_k\}_k$). In contrast, the baselines are limited when context is fixed and also when learned using a fixed a priori (possibly biased) about the structure of the matrices $\{\A_k\}_k$. From all these results, it follows that learning convolutional   parameters is not enough in order to recover from this bias. In sum, the gain of our GCN results from (i)  the high flexibility of the proposed design which allows learning  complementary aspects of topology as well as matrix parameters,  and also (ii)   the regularization effect of our constraints which mitigate overfitting. \\ 
\noindent Finally, we compare the classification performances   of our GCN against other related methods in action recognition  ranging from sequence based such as LSTM and GRU \cite{DeepGRU,GCALSTM,STALSTM} to deep graph (non-vectorial) methods based on spatial and spectral convolution \cite{kipf17,SGCCONV19,Bresson16}. From the results in Table \ref{compare},  our GCN brings a substantial gain w.r.t state of the art methods, and provides comparable results with the best vectorial methods.

 \begin{table}[!htb]
  \begin{center}
\resizebox{0.55\linewidth}{!}{
\begin{adjustbox}{angle=0}
\setlength\tabcolsep{2.4pt}
  \begin{tabular}{c||ccccccccccccccccccc}
   \rotatebox{90}{Perfs} &     \rotatebox{90}{90.00} &  \rotatebox{90}{96.00} &  \rotatebox{90}{94.00}&  \rotatebox{90}{96.00}&   \rotatebox{90}{49.7 }&  \rotatebox{90}{80.3 }&  \rotatebox{90}{86.9 }&  \rotatebox{90}{83.9 }&  \rotatebox{90}{80.35 }&  \rotatebox{90}{90.41}&   \rotatebox{90}{93.3 } &  \rotatebox{90}{90.5}&   \rotatebox{90}{91.51}&  \rotatebox{90}{94.9}&  \rotatebox{90}{97.2}&  \rotatebox{90}{95.7}&  \rotatebox{90}{93.7 } &  \rotatebox{90}{{{\bf 98.46} } }\\  
     &  &  &  &  &  &  &  &  &  &  &        &  &  &  &  &  &  &  &     \\
     \rotatebox{90}{Methods} &    \rotatebox{90}{ GCNConv \cite{kipf17}} & \rotatebox{90}{ArmaConv \cite{ARMACONV19}} & \rotatebox{90}{ SGCConv \cite{SGCCONV19}} & \rotatebox{90}{ ChebyNet \cite{Bresson16}}& \rotatebox{90}{  Raw coordinates  \cite{SBU12}} & \rotatebox{90}{Joint features \cite{SBU12}} & \rotatebox{90}{Interact Pose \cite{InteractPose}} & \rotatebox{90}{CHARM \cite{CHARM15}} & \rotatebox{90}{ HBRNN-L \cite{HBRNNL15}} & \rotatebox{90}{Co-occurrence LSTM \cite{CoOccurence16}} & \rotatebox{90}{ ST-LSTM \cite{STLSTM16}}  & \rotatebox{90}{ Topological pose ordering\cite{velocity2}} & \rotatebox{90}{ STA-LSTM \cite{STALSTM}} & \rotatebox{90}{ GCA-LSTM \cite{GCALSTM}} & \rotatebox{90}{ VA-LSTM  \cite{VALSTM}} & \rotatebox{90}{DeepGRU  \cite{DeepGRU}} & \rotatebox{90} {Riemannian manifold trajectory\cite{RiemannianManifoldTraject}}  &  \rotatebox{90}{Our best GCN model (orth+stc+$K=4$)}   \\  
 \end{tabular}
\end{adjustbox}}
\vspace{0.5cm}
 \caption{\it Comparison against state of the art methods.}   \label{compare}            
\end{center}
\end{table}
  \begin{figure}[tbp]
\center
\includegraphics[width=0.42\linewidth]{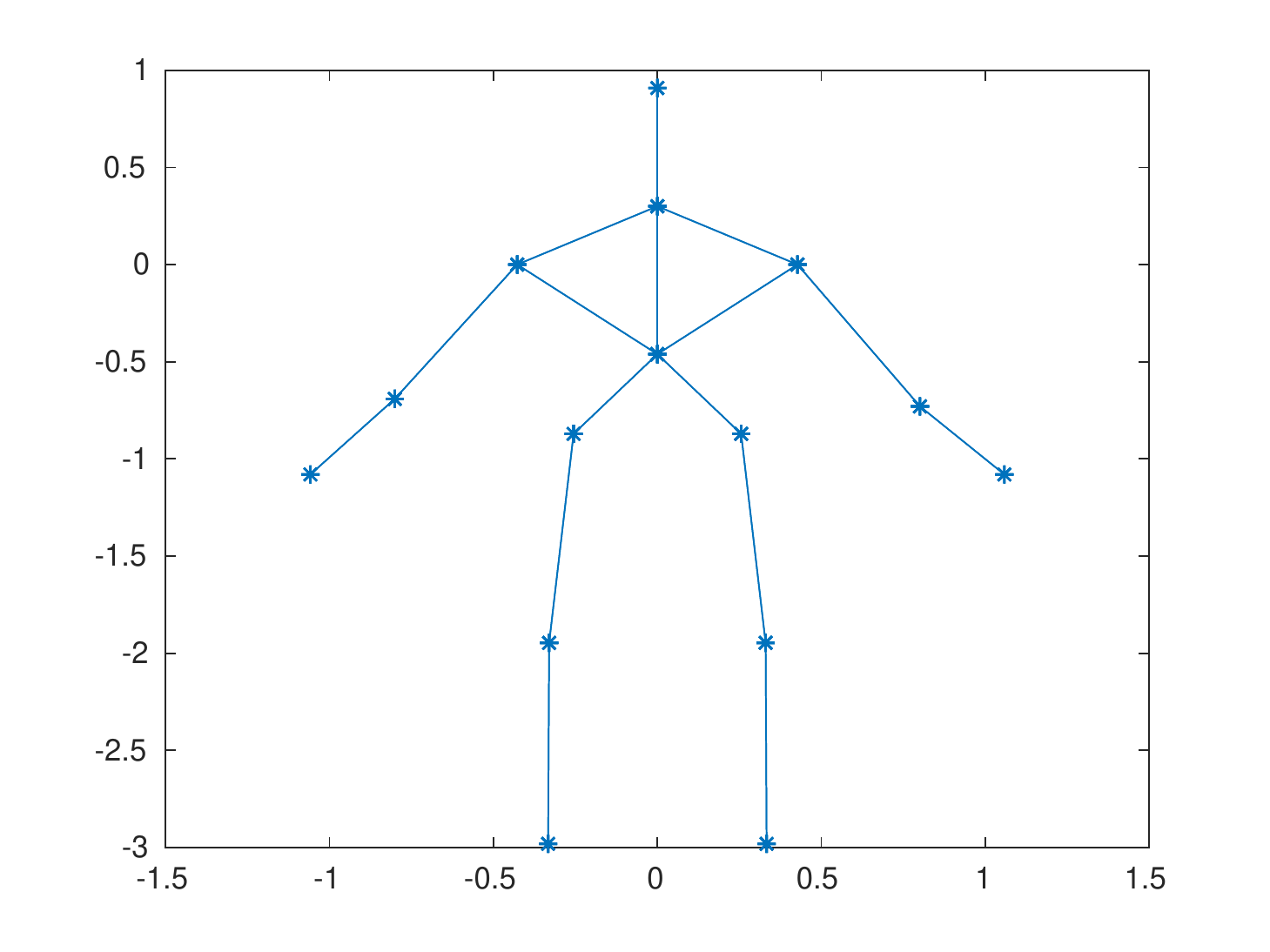}\includegraphics[width=0.42\linewidth]{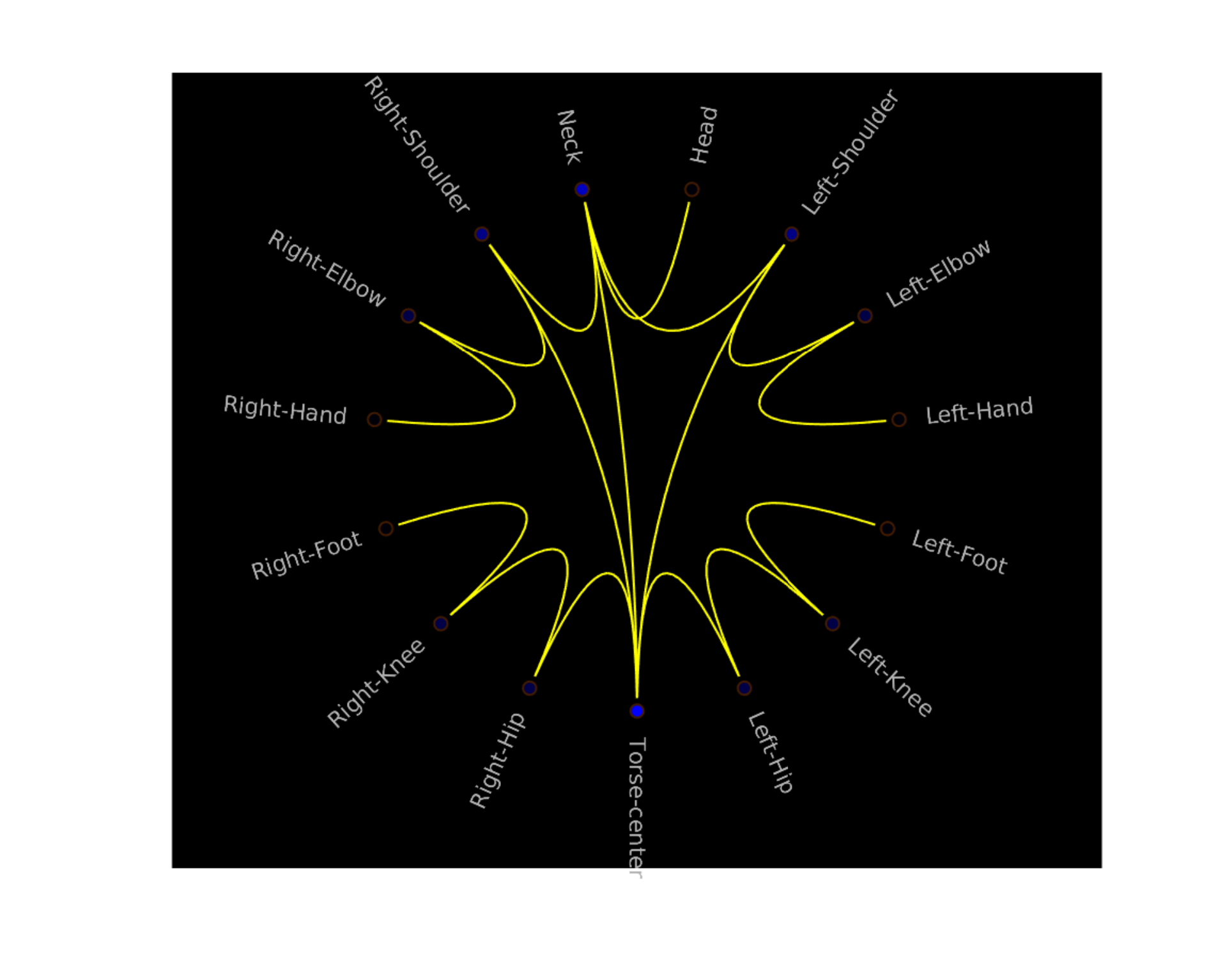}\\
\includegraphics[width=0.42\linewidth]{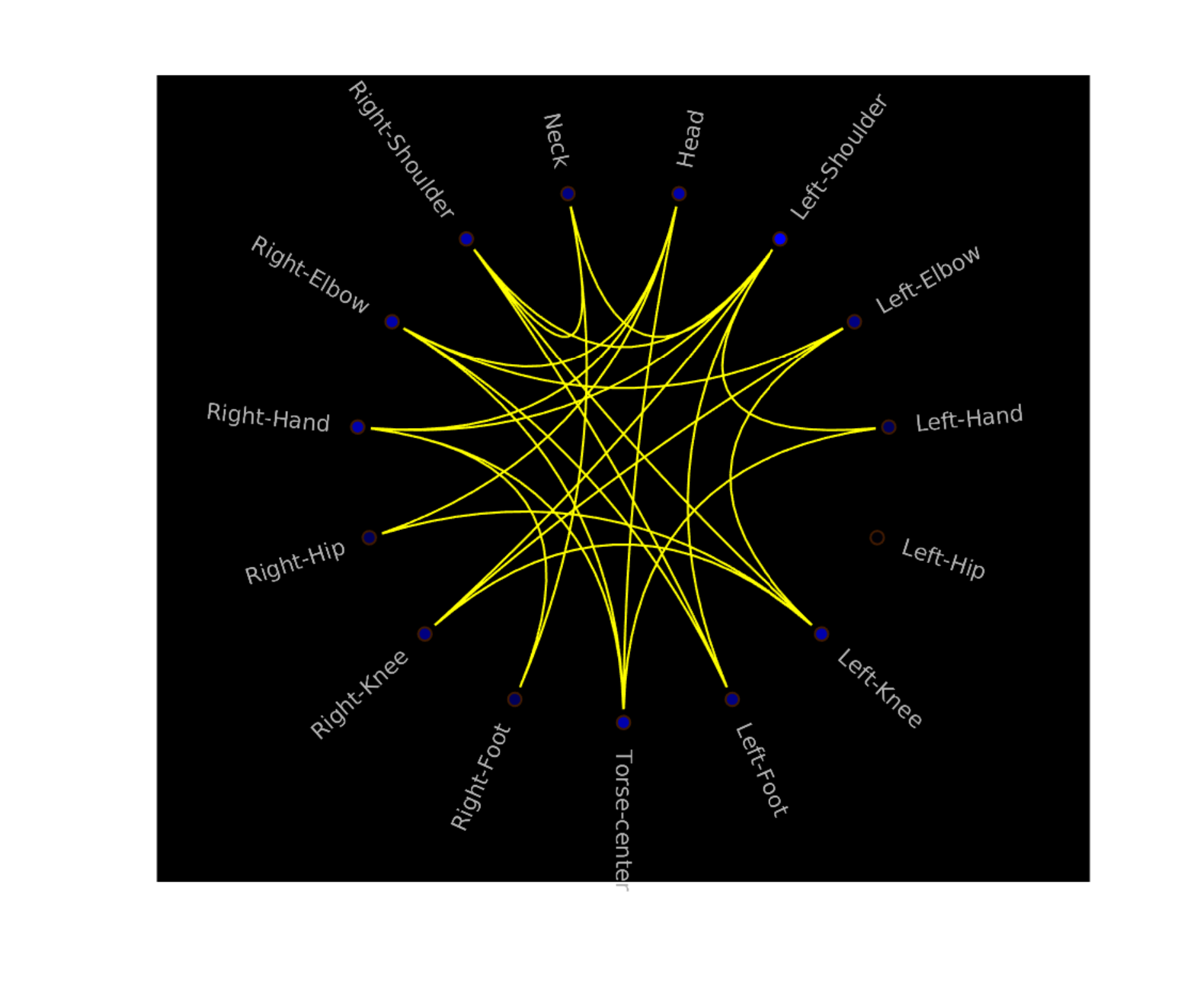}\includegraphics[width=0.42\linewidth]{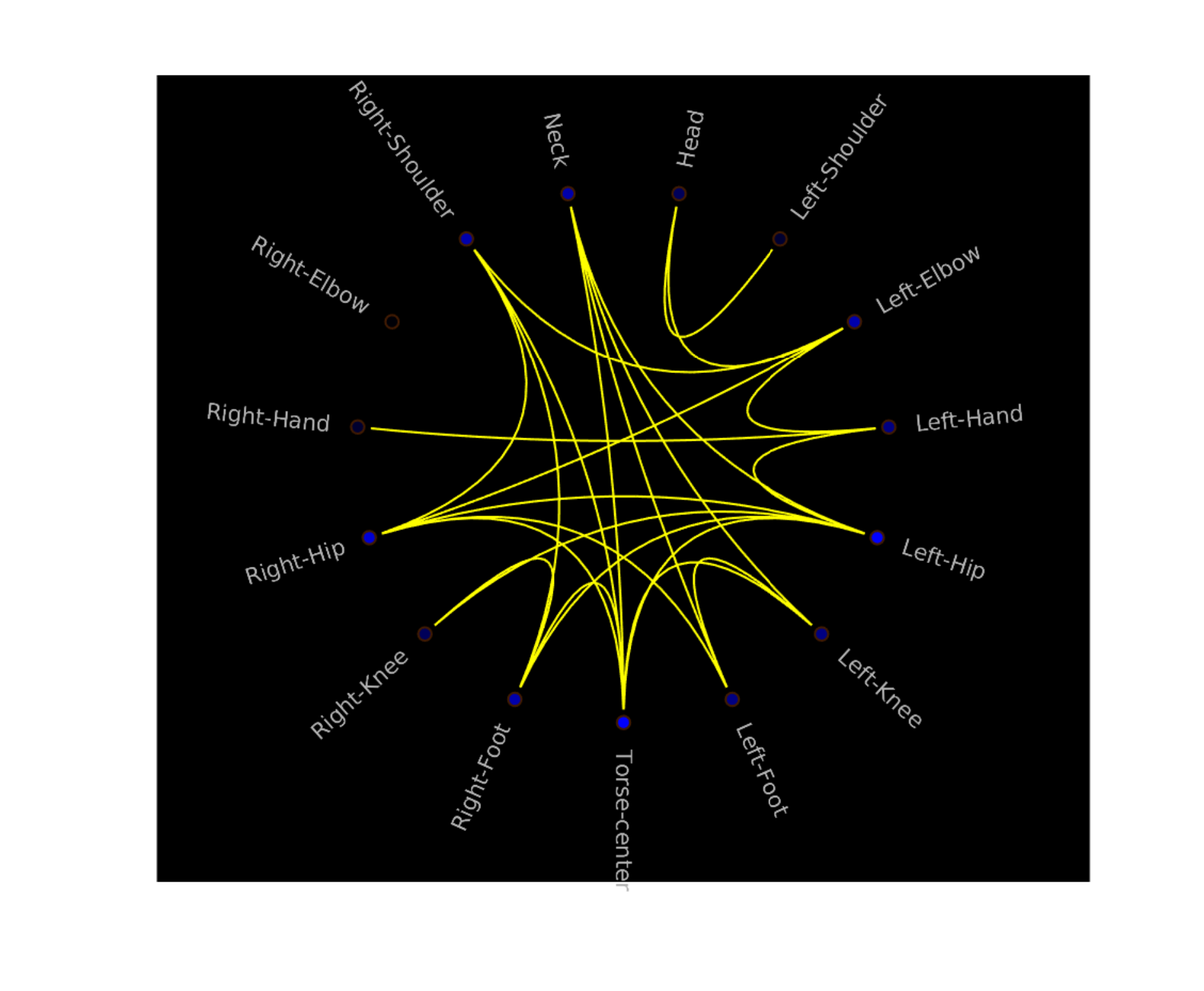}\\
\includegraphics[width=0.42\linewidth]{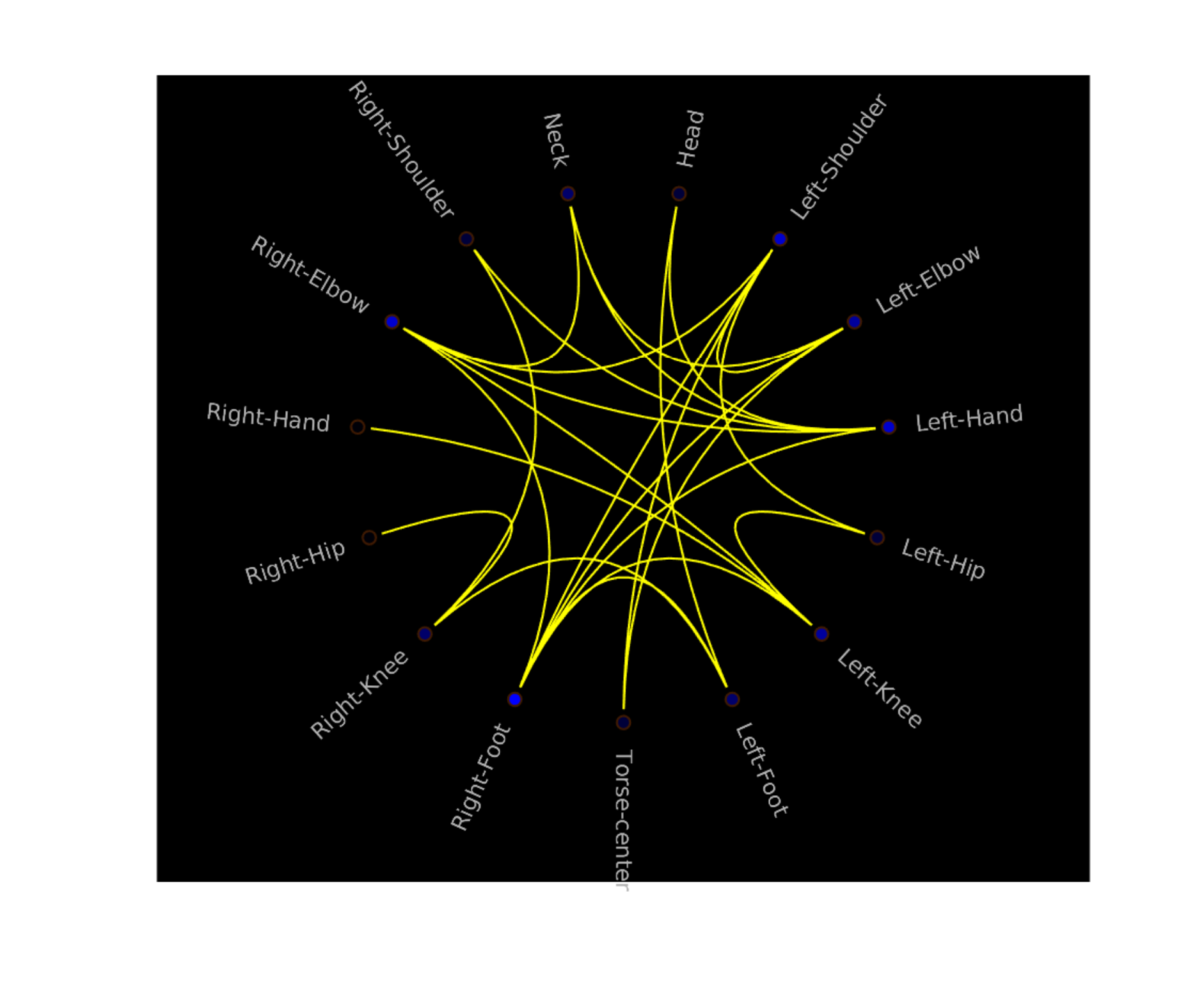}\includegraphics[width=0.42\linewidth]{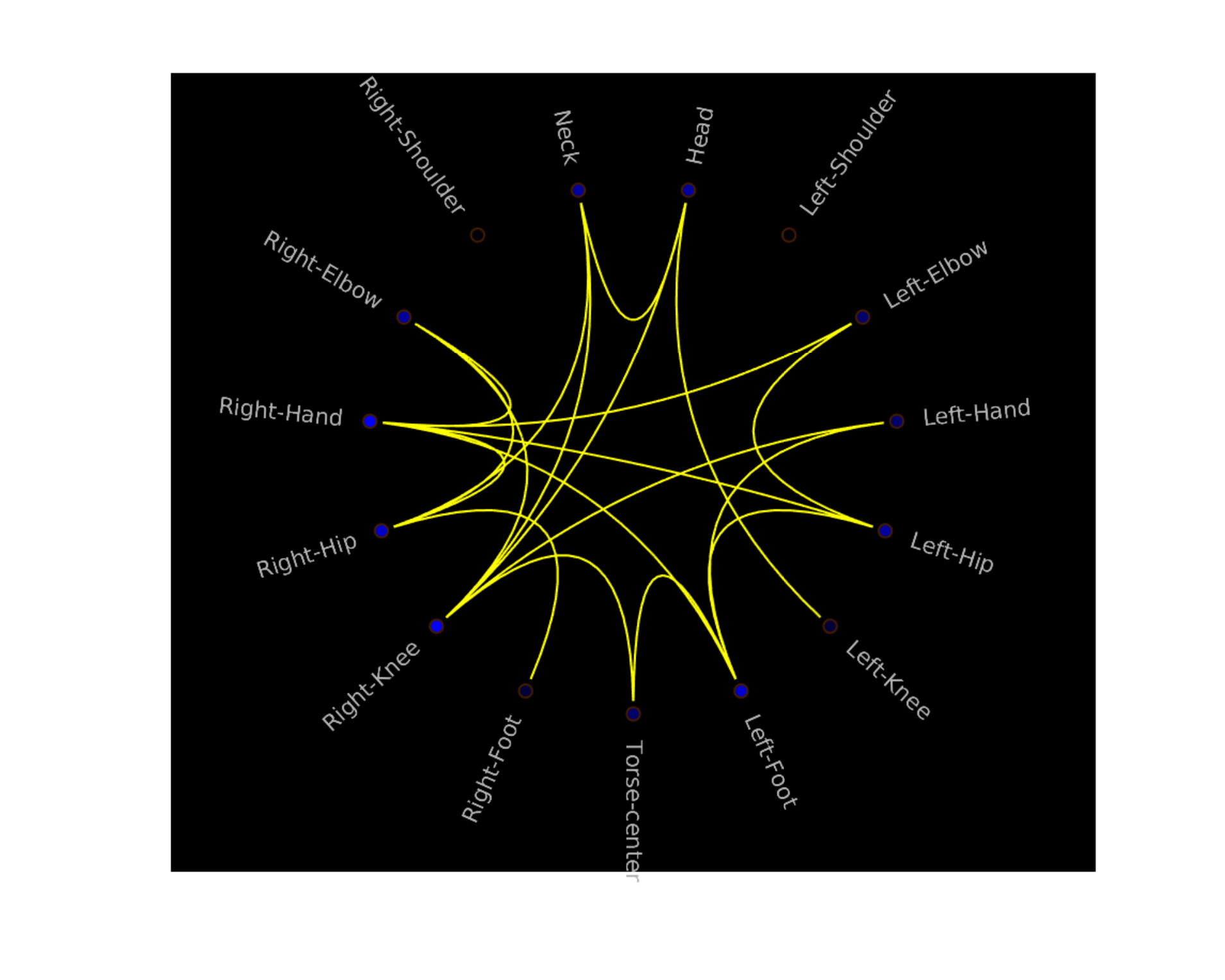}
\caption{\it This figure shows (top) an original skeleton with its intrinsic node-to-node relationships useful for {\it person identification}, and (middle/bottom) four types of extrinsic node-to-node relationships found to be the most discriminating for {\it action recognition} when using the method introduced in this paper (the exact setting corresponds to Table \ref{table21}, with  {\it our} learned matrix operators, using the {\it orthogonality} constraint and {\it $K=4$}). {\bf (Better to zoom the PDF version to view the learned node-to-node relationships).}}
\label{fig:A1}
\end{figure}

\section{Conclusion} 

We introduce in this paper  a novel method which learns different  matrix operators that "optimally" define the support of aggregations and convolutions in graph convolutional networks. We investigate different settings which allow  extracting non-differential and differential features as well as their combination before applying convolutions.   We also consider different constraints (including orthogonality and stochasticity) which  act as regularizers  on the learned matrix operators and make their learning efficient while being highly effective. Experiments conducted on the challenging task of skeleton-based action recognition show the clear gain of the proposed method w.r.t different baselines as well as the related work.

\end{document}